\pgfplotsset{compat=1.3}
\DeclareMathOperator*{\argmin}{arg\!\min}
\DeclareMathOperator*{\subjto}{subject\,to~}
\newcommand{\R}{\mathbb R}
\newcommand{\de}{\partial}
\newcommand{\mc}{\mathcal}
\newcommand{\tr}{^T}
\newcommand{\fb}{_\mathrm{fb}}
\newcommand{\h}{_\mathrm{h}}
\begin{document}
\mainmatter
\title{A Safety and Passivity Filter\\
for Robot Teleoperation Systems}
\titlerunning{A Safety and Passivity Filter for Robot Teleoperation Systems}
\author{Gennaro Notomista\inst{1} \and Xiaoyi Cai\inst{2}}
\authorrunning{Gennaro Notomista and Xiaoyi Cai}
\tocauthor{Gennaro Notomista and Xiaoyi Cai}
\institute{Georgia Institute of Technology, Atlanta, GA 30308, USA,\\
\email{g.notomista@gatech.edu}
\and Massachusetts Institute of Technology, Cambridge, MA 02139, USA,\\
\email{xyc@mit.edu}
}

\maketitle

\begin{abstract}
In this paper, we present a way of enforcing safety and passivity properties of robot teleoperation systems, where a human operator interacts with a dynamical system modeling the robot. The approach does so in a holistic fashion, by combining safety and passivity constraints in a single optimization-based controller which effectively \textit{filters} the desired control input before supplying it to the system. The result is a safety and passivity filter implemented as a convex quadratic program which can be solved efficiently and employed in an online fashion in many robotic teleoperation applications. Simulation results show the benefits of the approach developed in this paper applied to the human teleoperation of a second-order dynamical system.

\keywords{Robot teleoperation, safety of dynamical systems, passivity of dynamical systems, control barrier functions, integral control barrier functions}
\end{abstract}

\section{Introduction}

In robot teleoperation, the robot and a human operator can be seen as interconnected systems that exchange inputs and outputs. The dynamics of these systems, as well as that of the communication channel between them, can lead to unpredictable behaviors of the compound system. Therefore, it is often convenient analyzing robot teleoperation systems from an energetic point of view, which consists of keeping track of the energy the interconnected systems exchange between each other. \textit{Passivity}-based approaches to the control of interconnected systems \cite{hatanaka2015passivity} have demonstrated to be suitable in many application domains, ranging from telemanipulation \cite{stramigioli2002geometric,niemeyer2004telemanipulation,sieber2018human} to teleoperation of multi-robot systems \cite{chopra2006passivity}.

Passivity is an amenable property as it ensures the energy generated by the system does not exceed the one injected through the input to the system. Energy injected from external sources or other interconnected systems can make a system become non-passive, as discussed in~\cite{anderson1989bilateral}. Passivity theory allows us to analyze dynamical systems from an energetic point of view, so it is a very suitable design tool for dealing with interconnected systems and, therefore with robot teleoperation. Additionally, energy considerations can be useful to account for delays in the communication between a human teleoperator and a robotic system that is remotely controlled \cite{zampieri2008trends}, which can cause the performance of the algorithms to degrade, in terms of both convergence rate and stability \cite{olfati2004consensus}.

Passivity-based approaches, as well as other energy-based methods, for the control of robotic systems are considered in  \cite{anderson1989bilateral,niemeyer1991stable,wohlers2017lumped,yamauchi2017passivity,notomista2019passivity}. In \cite{duindam2004port}, the authors introduce the concept of \textit{energy tanks}, which is then extended in \cite{secchi2006position,secchi2007control,secchi2012bilateral,giordano2013passivity}. These works consider additional dissipative forces on a teleoperation system that has to kept passive in order to prevent energy tanks from depleting---a condition that would introduce a singularity in their proposed approaches---so as to keep a positive passivity margin, intended as the energy dissipated by the system over time.

Besides the passivity property, in many robotic applications, it is also desirable to ensure the \textit{safety} of the system, intended as the \textit{forward invariance} of a subset of the system state space. This is particularly crucial when robotic systems interact or collaborate with humans in order to perform a task. The safety of human operators can be guaranteed by constraining the robot to operate in safe regions of the workspace. To this end, control barrier functions (CBFs) \cite{ames2019control} are a control-theoretic tool which can be employed in order to ensure safety in dynamical systems.

\begin{figure}
\centering
\includegraphics[width=0.75\textwidth]{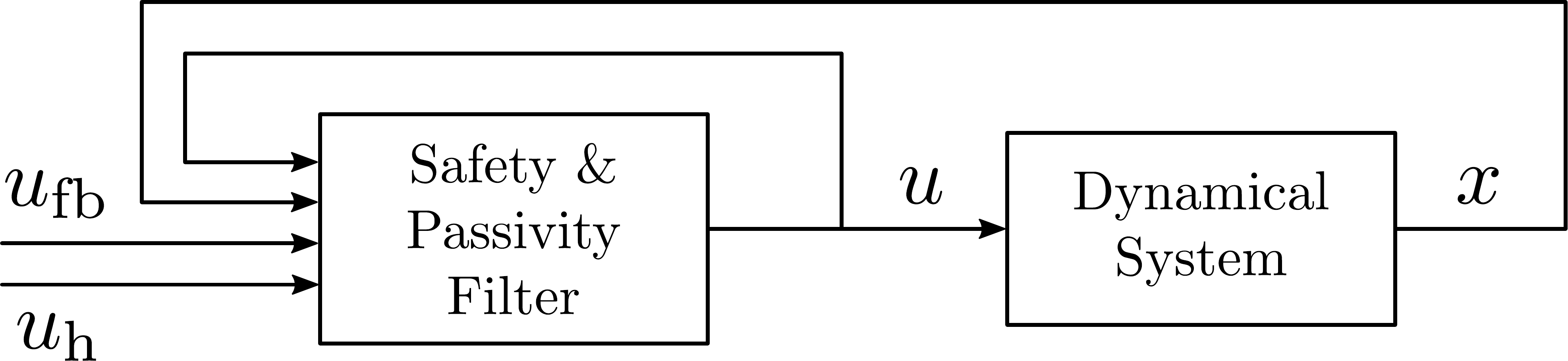}
\caption{Passivity and safety filter: the human input $u\h$ the state feedback controller $u\fb$ are modified by a filter before being supplied to the dynamical system (e.g. a teleoperated robotic system) in order to ensure its safety and passivity.}
\label{fig:passafetyfiltermin}
\end{figure}

In this paper, we propose a way of dealing with safety and passivity  objectives in a holistic fashion. We do so by introducing a \textit{safety and passivity filter} (see Fig.~\ref{fig:passafetyfiltermin}), whose goal is that of modifying the input to the robotic system in order to render it safe and passive. \textit{The filter effectively modifies the system itself in order to ensure that it remains safe and passive.} The proposed approach is able to seamlessly account for user-defined feedback control laws---which can be leveraged to endow the system with stability properties or to accomplish other objectives---combined with the inputs of a human interacting with the robot. The computational burden introduced by the designed filter is low, making the approach amenable for the real-time implementation on many robotic platforms.

\section{Background}

Throughout this paper, we consider a robotic system modeled by the following nonlinear control system:
\begin{equation}
\label{eq:sysdyn}
\left\{\begin{aligned}
\dot x &= f(x,u)\\
y &= g(x)
\end{aligned}\right.
\end{equation}
where $x\in\R^n$, $u\in\R^m$ and $y\in\R^m$ are the state, input and output, respectively, of the system, $f\colon\R^n\times\R^m \to\R^n$ is a Lipschitz continous vector fields and $g\colon\R^n\to\R^m$.

To account for both state feedback controllers and external human inputs, we explicitly consider the input $u$ broken down as follows:
\begin{equation}
\label{eq:input}
u = u\fb(x) + u\h,
\end{equation}
where $u\fb\colon\R^n\to\R^m$ denotes the state feedback component of the input, and $u\h$ represents the input given by a human operator.

As we are interested in provably guaranteeing passivity and safety properties of dynamical systems, in the following we briefly introduce this two concepts.

\begin{definition}[Passivity \cite{khalil2015nonlinear}]
\label{def:passivity}
The system \eqref{eq:sysdyn} is passive if there exists a continuously differentiable positive definite storage function ${V:\R^n\to\R}$ such that, for all $x$ and $u$,
\begin{equation}
\dot V = \dfrac{\de V}{\de x} f(x,u) \le u\tr y.
\end{equation}
The system is called lossless if $\dot V = u\tr y$.
\end{definition}

The definition of passivity is a statement about the system, which holds for all possible values of the input $u$ and the output $y$. Safety, on the other hand, is tied to the definition of a \textit{safe set}, i.e. a subset of the state space of the system where we want the state of the system to remain confined for all times. A technique which proved to be applicable to a variety of robotic systems \cite{ames2014control,wu2016safety,nguyen2016exponential,wang2017safety,ohnishi2019barrier,notomista2020persistification} and different scenarios consists of employing control barrier functions (CBFs). In the following, we introduce the definition of CBFs as in \cite{ames2019control} and the main result which will be used in this paper to ensure controlled forward invariance, i.e. controlled safety.

\begin{definition}[Control Barrier Functions (CBFs) \cite{ames2019control}]
\label{def:cbf}
Let $\mc C \subset \mc D \subset \R^n$ be the zero superlevel set of a continuously differentiable function $h\colon\mc D \to \R$. Then $h$ is a control barrier function (CBF) if there exists an extended class $\mc K_\infty$ function\footnote[1]{An extended class $\mc K_\infty$ function is a continuous function $\gamma : \R \to \R$ that is strictly increasing and with $\gamma(0) = 0$.} $\gamma$ such that, for the system \eqref{eq:sysdyn},
\begin{equation}
\label{eq:cbfdefinition}
\sup_{u \in \mc \R^m}  \left\{ L_f h(x,u) + \gamma(h(x))\right\} \geq 0.
\end{equation}
for all $ x \in \mc D$.
\end{definition}
The notation $L_f h(x)$ denotes the Lie derivative of $h$ along the vector field $f$. Given this definition of CBFs, the following theorem highlights how they can be used to ensure both set forward invariance (safety) and stability.

\begin{theorem}[Safety \cite{ames2019control}]
\label{thm:safety}
Let $\mathcal{C} \subset \R^n$ be a set defined as the zero superlevel set of a continuously differentiable function $h: \mc D \subset \R^n \to \R$.  If $h$ is a CBF on $\mc D$ with $0$ a regular value, then any Lipschitz continuous controller $u(x) \in \{ u \in \mc \R^m \colon L_f h(x,u) + \gamma(h(x)) \geq 0\}$ for the system \eqref{eq:sysdyn} renders the set $\mc C$ forward invariant (safe).  Additionally, the set $\mc C$ is asymptotically stable in $\mc D$.
\end{theorem}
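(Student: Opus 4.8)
The plan is to reduce the claim to a scalar differential inequality for $h$ along closed-loop trajectories and then exploit a comparison argument. First I would fix any admissible Lipschitz controller $u(x)$ from the set $\{u\in\R^m : L_f h(x,u)+\gamma(h(x))\ge 0\}$ and form the closed-loop vector field $f_{\mathrm{cl}}(x)=f(x,u(x))$. Since $f$ is Lipschitz continuous and $u$ is Lipschitz, $f_{\mathrm{cl}}$ is locally Lipschitz, so the Picard--Lindel\"of theorem guarantees existence and uniqueness of a solution $x(t)$ for each initial condition; this well-posedness is what makes the derivative of $h$ along trajectories meaningful.

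Next I would differentiate $h$ along such a solution. By the chain rule and the defining inequality of the controller, $\dot h(x(t)) = L_f h(x(t),u(x(t))) \ge -\gamma(h(x(t)))$. Setting $\eta(t)=h(x(t))$, this is the scalar inequality $\dot\eta \ge -\gamma(\eta)$, which I would compare against the solution $y(t)$ of the initial value problem $\dot y=-\gamma(y)$, $y(0)=\eta(0)$, concluding via the comparison lemma that $\eta(t)\ge y(t)$ on the interval of existence.

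The two conclusions then follow from the phase-line behavior of the comparison system. Since $\gamma$ is extended class $\mc K_\infty$, we have $\gamma(0)=0$, so $y\equiv 0$ is an equilibrium of $\dot y=-\gamma(y)$ that, by uniqueness, no trajectory crosses; hence $y(0)\ge 0$ forces $y(t)\ge 0$ for all $t$. Taking $x(0)\in\mc C$, i.e.\ $h(x(0))\ge 0$, gives $h(x(t))\ge y(t)\ge 0$, so $x(t)\in\mc C$ for all $t$ and $\mc C$ is forward invariant. For asymptotic stability in $\mc D$, note that for $y(0)<0$ the strict monotonicity of $\gamma$ yields $\dot y=-\gamma(y)>0$, so $y$ increases monotonically to the asymptotically stable equilibrium $0$. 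The bound $h(x(t))\ge y(t)\ge y(0)=h(x(0))$ confines the trajectory to the neighborhood $\{h\ge h(x(0))\}$ of $\mc C$ (Lyapunov stability), while $\liminf_{t\to\infty}h(x(t))\ge \lim_{t\to\infty}y(t)=0$ gives attractivity; the hypothesis that $0$ is a regular value of $h$ is used here to translate these bounds on $h$ into bounds on the distance to $\mc C$ and to ensure that $\partial\mc C=\{h=0\}$ is a genuine codimension-one boundary.

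The step I expect to be the main obstacle is making the comparison argument rigorous: an extended class $\mc K_\infty$ function is only assumed continuous, not locally Lipschitz, so $\dot y=-\gamma(y)$ need not have a unique solution and the textbook comparison lemma does not apply verbatim. I would address this by working with the maximal (equivalently, here, minimal) solution of the comparison system, or by approximating $\gamma$ from below by locally Lipschitz extended class $\mc K_\infty$ functions and passing to the limit. The remaining ingredients---closed-loop well-posedness, the chain-rule computation, and the sign analysis of the comparison system---are routine.
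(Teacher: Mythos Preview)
The paper does not supply its own proof of this theorem: it is quoted verbatim from \cite{ames2019control} as background and is immediately followed by the next paragraph on passivity, with no proof environment. So there is nothing in the paper to compare your argument against.

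That said, your sketch is the standard route taken in the CBF literature (including \cite{ames2019control}): reduce to the scalar inequality $\dot h\ge -\gamma(h)$ along closed-loop trajectories and invoke a comparison principle, reading off forward invariance from $y(0)\ge0\Rightarrow y(t)\ge0$ and set attractivity from $y(t)\nearrow 0$ when $y(0)<0$. The technical caveat you raise---that an extended class~$\mc K_\infty$ function is merely continuous, so $\dot y=-\gamma(y)$ may lack unique solutions and the textbook comparison lemma does not apply directly---is exactly the delicate point in the original proofs; it is handled there either via minimal/maximal solutions of the comparison ODE or, for the invariance part, by Nagumo's theorem on $\partial\mc C$ (where $\dot h\ge -\gamma(0)=0$). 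Your plan to approximate $\gamma$ from below by Lipschitz functions, or to work with extremal solutions, is an acceptable fix. One small refinement: for the asymptotic-stability-of-$\mc C$ conclusion you should make explicit that the regular-value hypothesis gives a local equivalence between $-h(x)$ and $\mathrm{dist}(x,\mc C)$ near $\partial\mc C$, which is what converts the bounds on $h$ into the required bounds on distance.
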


Besides safety, in this paper, we are interested in enforcing passivity conditions onto a dynamical system, representing for example a robot interacting with a human in a teleoperation task. However, the condition of passivity, recalled in \ref{def:passivity}, involves the input $u$. Recently introduced integral CBFs (I-CBFs) \cite{ames2020integral}---which generalize control dependent CBFs \cite{huang2019guaranteed}---can be leveraged to enforce passivity conditions. In order to take advantage of I-CBFs, the system \eqref{eq:sysdyn} needs to be \textit{dynamically extended} as follows:
\begin{equation}
\label{eq:sysdynext}
\left\{\begin{aligned}
\dot x &= f(x,u)\\
\dot u &= \phi(x,u,t) + v\\
y &= g(x)
\end{aligned}\right.
\end{equation}
where $v\in\R^m$ is the new control input and $\phi\colon\R^n\times\R^m\times\R\to\R^m$ will be designed to ensure that $u = u\fb(x) + u\h$, as in \eqref{eq:input}, as desired. 

We now have the necessary constructions to introduce integral CBFs.
\begin{definition}[Integral Control Barrier Functions (I-CBFs)\cite{ames2020integral}]
For the system \eqref{eq:sysdyn}, with corresponding safe set $\mc S = \{(x,u) \in \R^n \times \R^m \colon h(x,u) \geq 0\} \subset \R^n \times \R^m$ defined as the zero superlevel set of a function $h : \R^n \times \R^m \to \R$ with $0$ a regular value.  Then, $h$ is an integral control barrier function (I-CBF) if for any $(x,u) \in \R^n \times \R^m$ and $t \geq 0$:
\begin{equation}
\label{eq:icbfdef}
\frac{\partial h}{\partial u}(x,u) = 0 \implies \frac{\partial h}{\partial x}(x,u)f(x, u)  + \gamma(h(x,u)) \geq 0.
\end{equation}
\end{definition}

The implication in \eqref{eq:icbfdef} guarantees that, by means of an I-CBF $h$, the inequality
\begin{equation}
\dot h(x,u) + \gamma(h(x,u)) = \frac{\partial h}{\partial x}(x,u)f(x, u)  + \frac{\partial h}{\partial u}(x,u)\phi(x,u,t) + \gamma(h(x,u)) \geq 0
\end{equation}
---which, by the comparison lemma \cite{khalil2015nonlinear}, guarantees the forward invariance of the set $\mc S$---can always be satisfied by a proper choice of $\phi(x,u,t)$.
With the definition of I-CBFs, we concluded the introduction of the control-theoretical tools employed in the next section to design an input filter to render a dynamical system safe and passive.

\section{Safety and Passivity Filter}

In this section, we develop the safety and passivity filter. We brake down its structure into three components:
\begin{enumerate}
\item Safety-preserving controller, described in Section~\ref{subsec:safety}
\item Passivity-preserving controller, described in Section~\ref{subsec:passivity}
\item Tracking controller, described in Section~\ref{subsec:tracking}
\end{enumerate}
These three components will be then combined in Section~\ref{subsec:filter} where Proposition~\ref{prop:filter} is stated, which ensures that the designed filter achieves the desired properties.

\subsection{Ensuring Safety}
\label{subsec:safety}

Define the safe set $\mc S_x$ to be the zero superlevel set of a continuously differentiable function $h_x$, i.e.
\begin{equation}
\label{eq:safeset}
\mc S_x = \{x\in\R^n \colon h_x(x)\ge0\}.
\end{equation}
We want the state $x$ of the system \eqref{eq:sysdyn} to be confined in $\mc S_x$ for all times $t$. This condition, corresponding to safety, can be enforced using Theorem~\ref{thm:safety}.

If $h_x$ has relative degree 1 with respect to the input $u$---i.e. the time derivative $\dot h_x$ explicitly depends on $u$---then it has relative degree 2 with respect to the input $v$, based on the dynamic extension described by \eqref{eq:sysdynext}. In order for Theorem~\ref{thm:safety} to be applicable to the system \eqref{eq:sysdynext}, we need $L_f h_x(x,u)$ to depend on the input $v$, a condition that does not hold if the relative degree of $h_x$ with respect to $v$ is greater than 1.

To circumvent this issue, following the idea in \cite{notomista2020persistification} or \cite{ames2020integral}, let 
\begin{equation}
\label{eq:recursivecbf}
h_x^\prime(x,u) := \dot h_x(x,u) + \gamma(h_x(x)).
\end{equation}
Since $h_x^\prime$ depends on $u$, $L_f h_x^\prime$ depends on $v$. Then, in order to ensure the safety of $S_x$, we may choose any control input $v$ satisfying the following inequality:
\begin{equation}
L_f h_x^\prime(x,u,v) + \gamma_x(h_x^\prime(x,u)) \geq 0.
\end{equation}
This way, by Theorem~\ref{thm:safety}, $h_x^\prime(x,u)\ge0$ for all times, which, by \eqref{eq:recursivecbf}, is equivalent to $\dot h_x(x,u) + \gamma(h_x(x))\ge0$ for all times. The repeated application of Theorem~\ref{thm:safety} shows that $h_x(x)\ge0$, i.e. $S_x$ is safe (see also Example 8 in \cite{notomista2020persistification}).

\begin{remark}
\label{rmk:reldegree}
If $h_x$ has relative degree greater than 1 with respect to $u$, then recursive or exponential CBFs can be leveraged. See techniques developed in \cite{nguyen2016exponential} and \cite{notomista2020persistification}.
\end{remark}

To conclude this section, we notice that Theorem~\ref{thm:safety} and Remark~\ref{rmk:reldegree} suggest the definition of the following set of controllers:
\begin{equation}
\label{eq:Kx}
K_x(x,u) = \{ v \in \mc \R^m \colon L_f h_x^\prime(x,u,v) + \gamma_x(h_x^\prime(x,u)) \geq 0\}.
\end{equation}
Theorem~\ref{thm:safety} can be then interpreted as: if $v\in K_x(x,u)$, the set $\mc S_x$ is safe.

\subsection{Ensuring Passivity}
\label{subsec:passivity}

As pointed out before, as passivity is a condition on the control input $u$ rather than the state $x$, in this paper, we employ integral CBFs (I-CBFs) to ensure passivity conditions of a dynamical system. The following result---analogous to Theorem~\ref{thm:safety} for I-CBFs---will be leveraged.

\begin{theorem}[\cite{ames2020integral}]
\label{thm:icbfs} 
Consider the control system \eqref{eq:sysdyn} and suppose there is a corresponding dynamically defined controller $\dot u = \phi(x,u,t)$. If the safe set $\mc S \subset \R^n \times \R^m$ is defined by an integral control barrier function $h : \R^n \times \R^m \to \R$, then modifying the dynamically defined controller to be of the form
\begin{eqnarray}
\label{eq:umodifiedqp}
\dot u  =  \phi(x,u,t) + v^*(x,u,t)
\end{eqnarray}
with $v^*$ the solution of the quadratic program (QP)
\begin{equation}
\label{eq:CBFQPv}
\begin{aligned}
v^*(x,u,t)  =  \argmin_{v \in \R^{m}} ~& \| v \|^2 \\
\subjto &  \frac{\partial h}{\partial u}(x,u) v + \frac{\partial h}{\partial x}(x,u)f(x, u) \\
& + \frac{\partial h}{\partial u}(x,u)\phi(x,u,t) + \gamma(h(x,u)) \ge 0
\end{aligned}
\end{equation}
results in safety, i.e. the control system \eqref{eq:sysdynext} with the dynamically defined controller \eqref{eq:umodifiedqp} results in $\mc S$ being forward invariant: if $(x(0),u(0)) \in \mc S$ then $(x(t),u(t)) \in \mc S$ for all $t \geq 0$.
\end{theorem}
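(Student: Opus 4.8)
The plan is to reduce the claim to the standard comparison-lemma argument for CBFs, with the one nonstandard ingredient being the feasibility of the quadratic program \eqref{eq:CBFQPv}, which is exactly where the integral CBF structure enters. Concretely, I would (i) show the QP is feasible for every $(x,u,t)$, (ii) note that this makes the closed-loop vector field in \eqref{eq:sysdynext} well defined and regular enough for solutions to exist, and (iii) differentiate $h$ along closed-loop trajectories to obtain the differential inequality $\dot h + \gamma(h) \ge 0$, whence forward invariance follows.

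First I would dispatch feasibility. Writing $a := \left(\frac{\de h}{\de u}(x,u)\right)\tr$ and $b := \frac{\de h}{\de x}(x,u)f(x,u) + \frac{\de h}{\de u}(x,u)\phi(x,u,t) + \gamma(h(x,u))$, the single constraint of \eqref{eq:CBFQPv} reads $a\tr v + b \ge 0$. If $a \ne 0$ this is a nontrivial affine inequality whose feasible set is a nonempty half-space; its minimum-norm point is $v^* = 0$ when $b \ge 0$ and $v^* = -\frac{b}{\|a\|^2}\,a$ when $b < 0$. The only potential failure is the degenerate case $a = \frac{\de h}{\de u} = 0$ with $b < 0$, in which $v$ cannot influence the constraint. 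This is precisely what the I-CBF hypothesis forbids: by the implication \eqref{eq:icbfdef}, $\frac{\de h}{\de u} = 0$ forces $\frac{\de h}{\de x}f + \gamma(h) \ge 0$, and since the term $\frac{\de h}{\de u}\phi$ also vanishes, we get $b \ge 0$. Hence the QP is feasible everywhere, with $v^* = 0$ on the degenerate set.

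Next I would verify well-posedness of the closed loop: using the explicit minimizer above, $v^*$ is continuous (and, under the standard Lipschitz assumptions on $f$, $\phi$, $h$, $\gamma$, locally Lipschitz), the two branches matching on the boundary $\{a = 0\}$ precisely because the I-CBF property keeps $b \ge 0$ there. This guarantees existence and uniqueness of solutions $(x(t),u(t))$ of \eqref{eq:sysdynext}. Finally, along such a solution starting in $\mc S$, I set $H(t) := h(x(t),u(t))$ and differentiate:
\begin{equation}
\dot H = \frac{\de h}{\de x}f(x,u) + \frac{\de h}{\de u}\bigl(\phi(x,u,t) + v^*\bigr) = \bigl(a\tr v^* + b\bigr) - \gamma(H) \ge -\gamma(H),
\end{equation}
the inequality being exactly the constraint that $v^*$ satisfies by construction. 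Since $H(0) \ge 0$ and $\gamma$ is extended class $\mc K_\infty$, the comparison lemma \cite{khalil2015nonlinear} yields $H(t) \ge 0$, i.e. $(x(t),u(t)) \in \mc S$, for all $t \ge 0$.

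The hard part is the feasibility step, as it is the only place the integral CBF definition is genuinely invoked; the remainder is the familiar comparison-lemma forward-invariance argument transplanted to the extended state $(x,u)$. The one subtle technical point is the continuity of $v^*$ across the set $\{\frac{\de h}{\de u} = 0\}$, which must be checked carefully to justify that \eqref{eq:sysdynext} with the QP controller admits well-defined trajectories in the first place.
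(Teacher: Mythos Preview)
The paper does not prove this theorem: it is quoted verbatim from \cite{ames2020integral} as a background result and used as a black box in the subsequent sections. So there is no in-paper proof to compare your proposal against.

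That said, your outline is the standard argument from the cited reference and is essentially correct: feasibility of the QP is exactly the content of the I-CBF implication \eqref{eq:icbfdef}, the closed-form minimizer you wrote down is the right one, and the comparison-lemma step is routine once feasibility is in hand. You are also right to flag the continuity of $v^*$ across $\{\partial h/\partial u = 0\}$ as the delicate point; note that your branch-matching argument only shows $v^*=0$ on that set, not that $\|v^*\| = |b|/\|a\|$ tends to zero as $a \to 0$ along sequences with $b<0$, so genuine continuity (hence Lipschitzness of the closed loop) typically requires either the stronger assumption $\partial h/\partial u \neq 0$ on the relevant region or an additional regularity hypothesis, as in the original reference.
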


We now define an I-CBF which Lemma~\ref{lem:passafe} shows to be suitable to ensure the passivity of the system \eqref{eq:sysdyn}.

Let $V\colon\R^n\to\R$ be a continuously differentiable positive definite function, and define the following I-CBFs:
\begin{equation}
\label{eq:passivityicbf}
h_u(x,u) := g(x)\tr u - L_f V(x,u).
\end{equation}
The corresponding safe set $\mc S_u$ is defined as
\begin{equation}
\label{eq:passivitysafeset}
S_u = \{(x,u)\in\R^n\times\R^m\colon g(x)\tr u - L_f V(x,u) \ge 0 \}.
\end{equation}

\begin{lemma}[Passivity as safety]
\label{lem:passafe}
Safety of $\mc S_u$ in \eqref{eq:passivitysafeset} $\implies$ Passivity of \eqref{eq:sysdyn}.
\end{lemma}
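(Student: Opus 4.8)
The plan is to show that the constraint defining $\mc S_u$ is, after unpacking definitions, nothing but the dissipation inequality of Definition~\ref{def:passivity}, so that enforcing the former yields the latter with $V$ as the storage function. First I would rewrite $h_u$ in \eqref{eq:passivityicbf}. By definition of the Lie derivative, $L_f V(x,u) = \frac{\de V}{\de x} f(x,u)$, which is precisely $\dot V$ evaluated along the dynamics of \eqref{eq:sysdyn}. Using the output map $y = g(x)$, I would also identify $g(x)\tr u = y\tr u = u\tr y$. Substituting both into \eqref{eq:passivityicbf} gives the identity $h_u(x,u) = u\tr y - \dot V$.

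With this identity in hand, safety of $\mc S_u$---i.e. forward invariance, meaning $h_u(x(t),u(t)) \ge 0$ for all $t \ge 0$---translates directly into $\dot V \le u\tr y$ holding along every trajectory of the (filtered) closed-loop system. Since $V$ was chosen continuously differentiable and positive definite, it is an admissible storage function, and the inequality $\dot V \le u\tr y$ is exactly the condition in Definition~\ref{def:passivity}. This establishes passivity of \eqref{eq:sysdyn}, with $V$ serving as the certifying storage function.

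The one place requiring care is the quantifier mismatch: Definition~\ref{def:passivity} phrases the dissipation inequality \enquote{for all $x$ and $u$}, whereas forward invariance of $\mc S_u$ only certifies it along trajectories that remain inside $\mc S_u$. I would resolve this by emphasizing that the role of the filter (to be assembled in Section~\ref{subsec:filter}) is precisely to keep $(x(t),u(t)) \in \mc S_u$ for all $t$; on such trajectories $h_u \ge 0$, hence $\dot V \le u\tr y$ pointwise in time. Integrating from $0$ to $T$ then yields $V(x(T)) - V(x(0)) \le \int_0^T u\tr y \, dt$, the supply-rate/energy inequality that is the operative statement of passivity for the teleoperation system. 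Making this passage from the pointwise safety constraint to the integral energy balance is the only substantive step; everything else is a direct rewriting of the definitions of $L_f V$ and of the output $y$.
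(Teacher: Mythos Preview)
Your proposal is correct and follows essentially the same route as the paper: both unpack $h_u(x,u)=g(x)\tr u-L_fV(x,u)$ into $u\tr y-\dot V$ via the output map and the Lie-derivative definition, and then read off the dissipation inequality from forward invariance of $\mc S_u$. Your additional paragraph on the quantifier mismatch and the passage to the integral energy balance is more careful than the paper's own proof, which simply asserts passivity from $\dot V\le y\tr u$ holding ``for all $t$'' without commenting on Definition~\ref{def:passivity}'s ``for all $x$ and $u$'' phrasing.
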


\begin{proof}
Assume $\mc S_u$ is safe, i.e. $(x,u) \in \mc S_u$ for all $t$.
From \eqref{eq:passivitysafeset}, one has:
\begin{equation}
\label{eq:passivitycond}
g(x)\tr u - L_f V(x,u) = y \tr u - \frac{\de V}{\de x} f(x,u) = y\tr u - \dot V \ge 0
\end{equation}
for all $t$. Thus, $y\tr u \ge \dot V$ and, by Definition~\ref{def:passivity}, the system is passive with storage function $V$ and with respect to input $u$ and output $y$.
\qed
\end{proof}

\begin{remark}
The expression of $h_u$ in \eqref{eq:passivityicbf} represents the power dissipated by the system. In \cite{secchi2006position} and \cite{notomista2019passivity}, methods to ensure the passivity of a system in terms of energy are proposed. While those approaches are more flexible, insofar as they enforce conditions similar to $h_u(x,u)\ge0$ in \eqref{eq:passivityicbf}, they are also more sensitive to parameter tuning (see, for instance, discussions on $T_\mathrm{max}$ in \cite{giordano2013passivity}).
\end{remark}

Similarly to what has been done before, the result in Theorem~\ref{thm:icbfs} suggests the definition of the following set of controllers:
\begin{equation}
\label{eq:Kx}
\begin{aligned}
K_u(x,u) = \bigg\{ v \in \mc \R^m \colon &\frac{\partial h_u}{\partial u}(x,u) v + \frac{\partial h_u}{\partial x}(x,u)f(x, u)\\
&+ \frac{\partial h_u}{\partial u}(x,u)\phi(x,u,t) + \gamma(h_u(x,u)) \ge 0 \bigg\}.
\end{aligned}
\end{equation}
By Theorem~\ref{thm:icbfs}, the safety of $\mc S_u$---and, by Lemma~\ref{lem:passafe}, the passivity of \eqref{eq:sysdyn}---is enforced using the I-CBF $h_u$ by picking a controller in $K_u(x,u)$.

With this result in place, we are now ready to combine safety and passivity. Before presenting the safety and passivity filter, in the following section we show how to ensure that the dynamically extended system \eqref{eq:sysdynext} asymptotically behaves as the original system \eqref{eq:sysdyn} when safety constraints are not violated.

\subsection{Tracking of Desired Control Inputs}
\label{subsec:tracking}

The dynamic extension of the system \eqref{eq:sysdyn} proposed in \eqref{eq:sysdynext} is required in order to enforce constraints on the input $u$---the passivity constraints---through a proper choice of $v$. On the other hand, due to this extension, we are not able to control the original system \eqref{eq:sysdyn} using $u$ anymore, but rather we have to design a suitable function $\phi$ in \eqref{eq:sysdynext} in order to \textit{track} the desired $u$ using $v$. The objective of this section is that of presenting a controller that serves this purpose\footnote[3]{It is worth noticing that there are cases in which a dynamically defined controller $\dot u$ is already available (see, for instance, \cite{wardi2019tracking}).}.

Assume we want $u=u\fb(x) + u\h$ as in \eqref{eq:input}. As $\dot u = \phi(x,u) + v$, one could set
\begin{equation}
\label{eq:trackingudot}
\dot u = \dot u\fb(x) + \dot u\h + v^*= \underbrace{L_f u\fb(x,u) + \dot u\h}_{=: \phi(x,u,t)} + v^*,
\end{equation}
where $v^*$ is given by \eqref{eq:CBFQPv} \cite{notomista2020long}. This choice, however, may cause $u(t)$ to diverge over time more and more from its desired value $u\fb(x(t)) + u\h(t)$, due to the fact that $v^*$ from \eqref{eq:CBFQPv} is the minimizer of the difference between the time derivatives of the input functions. In fact, from \eqref{eq:umodifiedqp},
\begin{equation}
\| v^* \| = \|\dot u - \phi(x,u,t)\|,
\end{equation}
is the norm of the difference between the derivative of $u$---rather than the input function itself---and $\phi$. The following theorem presents a dynamically defined control law which results in $u(t)$ converging to the desired value $u\fb(x(t)) + u\h(t)$ as $t\to\infty$ whenever safety is not violated.

\begin{proposition}
\label{prop:tracking}
Consider the system \eqref{eq:sysdyn} and a desired nominal input signal \eqref{eq:input}. Consider an I-CBF $h \colon \R^n \times \R^m \to \R$ defined to ensure the safety of the set $\mc S \subset \R^n \times \R^m$ defined as its zero superlevel set. Then, the dynamically defined controller
\begin{equation}
\label{eq:utrack}
\dot u = \underbrace{L_f u\fb(x,u) + \dot u\h + \frac{\alpha}{2}(u\fb(x)+u\h-u)}_{=:\phi(x,u,t)} + v^*,
\end{equation}
where $\alpha > 0$ and $v^*$ is given in \eqref{eq:CBFQPv}, will ensure the safety of the set $\mc S$, as well as the tracking of the nominal control signal \eqref{eq:input} whenever the controller $\phi(x,u,t)$ is safe.
\end{proposition}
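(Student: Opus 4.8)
The plan is to establish the two claimed properties by separate arguments, since each rests on a different mechanism.

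Safety is essentially immediate. The controller \eqref{eq:utrack} has exactly the form $\dot u = \phi(x,u,t) + v^*$ with $v^*$ the solution of the QP \eqref{eq:CBFQPv} for the given I-CBF $h$. The internal structure of $\phi$ is irrelevant to the invariance argument: Theorem~\ref{thm:icbfs} applies verbatim for any continuous $\phi$ and guarantees forward invariance of $\mc S$. Hence I would dispatch safety in one line, by quoting Theorem~\ref{thm:icbfs} with this particular $\phi$.

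For tracking I would introduce the tracking error $e := u\fb(x) + u\h - u \in \R^m$ and study its dynamics along trajectories of \eqref{eq:sysdynext}. Using that $u\fb$ depends only on $x$, so that $\frac{d}{dt}u\fb(x(t)) = \frac{\partial u\fb}{\partial x}f(x,u) = L_f u\fb(x,u)$, I expect
\begin{equation}
\dot e = L_f u\fb(x,u) + \dot u\h - \dot u .
\end{equation}
Substituting $\dot u$ from \eqref{eq:utrack}, the feedforward terms $L_f u\fb(x,u)$ and $\dot u\h$ cancel against their counterparts inside $\phi$, leaving
\begin{equation}
\dot e = -\frac{\alpha}{2}\,e - v^* .
\end{equation}
This identity is the heart of the construction: the feedforward part of $\phi$ is chosen precisely to annihilate the drift of $u\fb(x)+u\h$, so that the residual error is driven only by the proportional correction $\frac{\alpha}{2}(u\fb(x)+u\h-u)$ and by the safety override $v^*$.

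To close the argument I would interpret the qualifier ``whenever the controller $\phi$ is safe'' as the regime in which the constraint of the QP \eqref{eq:CBFQPv} is satisfied by $v=0$, so that the minimizer returns $v^* = 0$. On any time interval on which this holds the error obeys the linear ODE $\dot e = -\frac{\alpha}{2}\,e$, whose solution satisfies $e(t) = e(t_0)\,\mathrm{e}^{-\frac{\alpha}{2}(t-t_0)}$; since $\alpha > 0$ this gives $e(t)\to 0$, i.e. $u(t)\to u\fb(x(t))+u\h(t)$, which is the claimed asymptotic tracking. The main obstacle I anticipate is not the computation but making this conditional precise: one must argue that when $\phi$ alone keeps the trajectory in $\mc S$ (so that no corrective action is triggered and $v^*\equiv 0$), the exponential estimate yields tracking from the outset, while acknowledging that during intervals of active safety intervention the term $-v^*$ may transiently inflate $\|e\|$, so that exact tracking is only asserted in the safety-inactive regime. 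Pinning down this case split, and reconciling it with the statement that safety is preserved unconditionally, is the delicate point of the proof.
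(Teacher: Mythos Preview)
Your proposal is correct and essentially coincides with the paper's own proof: both dispatch safety by quoting Theorem~\ref{thm:icbfs}, interpret ``$\phi$ safe'' as $v^*=0$, and then reduce tracking to the linear error dynamics $\dot e = -\frac{\alpha}{2}e$. The only cosmetic difference is that the paper packages this last step via the Lyapunov function $W=\frac{1}{2}\|e\|^2$ and computes $\dot W = -\alpha W$, which is equivalent to your direct exponential estimate.
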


\begin{proof}
First of all, by $\phi(x,u,t)$ being safe we mean that the constraints in \eqref{eq:CBFQPv} are inactive and, consequently, $v^*(x,u,t)=0$. Then, by Theorem~\ref{thm:icbfs}, the controller \eqref{eq:utrack} results in the forward invariance, i.e. safety, of the set $\mc S$. Therefore, we only need to confirm that, if the controller \eqref{eq:utrack} is safe, then $u$ will track the nominal controller \eqref{eq:input}. To this end, let us consider the following Lyapunov function candidate for the system in \eqref{eq:sysdynext} with $\phi(x,u,t)$ given in \eqref{eq:trackingudot}:
\begin{equation}
W(u,x,u\h) = \frac{1}{2} \| u - u\fb(x) - u\h \|^2.
\end{equation}
Its time derivative evaluates to:
\begin{equation}
\begin{aligned}
\dot W &= \frac{\partial W}{\partial u}\dot u + \frac{\partial W}{\partial x}\dot x + \frac{\partial W}{\partial u\h}\dot u\h\\
&= (u - u\fb(x) - u\h)\tr \left( \dot u - L_f u\fb(x,u) - \dot u\h \right)\\
\end{aligned}
\end{equation}
Substituting the proposed controller \eqref{eq:utrack}, we obtain
\begin{equation}
\begin{aligned}
\dot W &= (u - u\fb(x) - u\h)\tr \\
&\quad\left( L_f u\fb(x,u) + \dot u\h + \frac{\alpha}{2}(u\fb(x)+u\h-u) - L_f u\fb(x,u) - \dot u\h \right)\\
&= \frac{\alpha}{2}(u - u\fb(x) - u\h)\tr (u\fb(x)+u\h-u)\\
&=-\alpha W(u,x,u\h).
\end{aligned}
\end{equation}
Thus, $W(t)\to0$, or equivalently $u(t)\to u\fb(x(t)) + u\h(t)$, as $t\to\infty$, i.e. the input $u$ will track the desired control signal \eqref{eq:input}.
\qed
\end{proof}

\begin{remark}
The variable $\dot u$ only appears in the software implementation of the passive and safe controller for the system \eqref{eq:sysdyn}. The actual input given to the system is its integral $u(t)$. Therefore, the value $\alpha$ in the expression of the dynamically defined controller \eqref{eq:utrack} can be chosen arbitrarily large, being aware of not introducing rounding or numerical errors while solving the QP \eqref{eq:CBFQPv}. As can be noticed in the proof of Proposition~\ref{prop:tracking}, the larger the value of $\alpha$ is, the faster the convergence of $u$ to the desired controller \eqref{eq:input} when $v^*=0$ (i.e. when no safety-related modifications of $\dot u$ are required).
\end{remark}

\begin{remark}
\label{rmk:integral}
Integrating the expression of the dynamically defined controller in \eqref{eq:utrack} with respect to time, we get:
\begin{align}
u(t) &= \int_0^t \left( L_f u\fb(x(\tau),u(\tau)) + \dot u\h(\tau) \right) d\tau + \underbrace{\frac{\alpha}{2}\int_0^t \left( u\fb(x(\tau))+u\h(\tau)-u(\tau) \right)d\tau}_\text{Integral control},
\end{align}
where we explicitly recognize the \textit{integral component} of the dynamically defined controller which ensures the desired tracking properties \cite{wardi2019tracking}.
\end{remark}

\subsection{Safety- and Passivity-preserving Controller Design}
\label{subsec:filter}

In this section, we combine the results of the previous three subsection to design a safety and passivity input filter.

\begin{proposition}[Main result]
\label{prop:filter}
Consider a dynamical system \eqref{eq:sysdyn}, a set $S_x$ where we want the state $x$ of the system to remain confined for all times (\textit{safety}), and a continuously differentiable positive definite function $V$ with respect to which we want the system to be passive (\textit{passivity}). If the controller
\begin{equation}
\label{eq:passafeinputQP}
v^*(x,u,t)  =  \argmin_{v \in K_{xu}(x,u)} ~ \| v \|^2,
\end{equation}
where $K_{xu}(x,u) = K_x(x,u) \cap K_u(x,u) \subset \R^{m}$, exists for all times $t$, then the system \eqref{eq:sysdyn} is safe and passive.
\end{proposition}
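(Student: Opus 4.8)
The plan is to split the claim into its two constituent properties---safety and passivity---and discharge each by invoking the machinery assembled in the previous three subsections, using only the defining property of the joint QP \eqref{eq:passafeinputQP}: whenever its minimizer $v^*(x,u,t)$ exists it belongs to $K_{xu}(x,u)=K_x(x,u)\cap K_u(x,u)$, and therefore lies in $K_x(x,u)$ and in $K_u(x,u)$ simultaneously, for all $t$. The two halves of the argument then proceed in parallel from these two memberships.

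For safety, I would use that $v^*\in K_x(x,u)$, so that the inequality $L_f h_x^\prime(x,u,v^*)+\gamma_x(h_x^\prime(x,u))\ge 0$ holds along the closed-loop trajectory of the extended system \eqref{eq:sysdynext}. I would then replay the recursive-CBF argument of Section~\ref{subsec:safety}: a first application of Theorem~\ref{thm:safety}, with $h_x^\prime$ as barrier, keeps $h_x^\prime(x,u)\ge 0$, which by \eqref{eq:recursivecbf} is precisely $\dot h_x+\gamma(h_x)\ge 0$; a second application of Theorem~\ref{thm:safety} then yields $h_x(x)\ge 0$, i.e. $x(t)\in\mc S_x$ for all $t$. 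This is exactly the safety conclusion.

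For passivity, I would use that $v^*\in K_u(x,u)$, so that the inequality defining $K_u$ holds---and this inequality is verbatim the feasibility constraint of the QP \eqref{eq:CBFQPv} for the I-CBF $h_u$ together with the dynamically defined controller $\phi$. Theorem~\ref{thm:icbfs} then gives forward invariance of $\mc S_u$, and Lemma~\ref{lem:passafe} converts safety of $\mc S_u$ into passivity of \eqref{eq:sysdyn} with storage function $V$. Combining the two conclusions establishes both properties and closes the proof.

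The logical combination is essentially immediate once the membership $v^*\in K_x\cap K_u$ is secured, so the real difficulty---and the reason the statement is phrased conditionally---is the standing hypothesis that $v^*$ exists for all $t$, i.e. that $K_{xu}$ is nonempty along the trajectory. There is no a priori guarantee that a single input $v$ can simultaneously satisfy the safety half-space and the passivity half-space, and resolving such a potential conflict is not addressed by the argument; it is absorbed into the hypothesis. A second, more technical point I would want to verify is that Theorem~\ref{thm:safety} and Theorem~\ref{thm:icbfs} require the closed-loop law $v^*$ to be (Lipschitz) continuous: the minimizer of the strictly convex QP \eqref{eq:passafeinputQP} over the polyhedron $K_{xu}$ is unique, and under a constraint qualification on the gradients of the active constraints it depends continuously on $(x,u,t)$, which is what the two invariance theorems need.
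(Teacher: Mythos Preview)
Your proposal is correct and follows essentially the same route as the paper's own proof: split via $v^*\in K_x\cap K_u$, invoke Theorem~\ref{thm:safety} for the safety half and Theorem~\ref{thm:icbfs} together with Lemma~\ref{lem:passafe} for the passivity half. Your write-up is in fact more careful than the paper's---you spell out the two-step recursive application of Theorem~\ref{thm:safety} and flag the Lipschitz-continuity and feasibility issues that the paper leaves implicit---but the underlying argument is the same.
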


\begin{proof}
The proof of this proposition is based on the combination of the results of Theorems~\ref{thm:safety} and \ref{thm:icbfs} with Lemma~\ref{lem:passafe}.

If the QP \eqref{eq:passafeinputQP} has a solution for all $t$, then $v^*(x,u,t)\in K_{xu}(x,u)$ for all $t$. Then, by Theorem~\ref{thm:safety}, as $v^*(x,u,t)\in K_{x}(x,u)$, $S_x$ defined in \eqref{eq:safeset} using $h_x$ is forward invariant, i.e. safe. Moreover, as $v^*(x,u,t)\in K_{u}(x,u)$ for all $t$, Theorem~\ref{thm:icbfs} ensures that $S_u$ defined in \eqref{eq:passivitysafeset} using $h_u$ is safe. Thus, by Lemma~\ref{lem:passafe}, the system \eqref{eq:sysdyn} is passive.
\qed
\end{proof}

\begin{figure}
\centering
\includegraphics[width=0.75\textwidth]{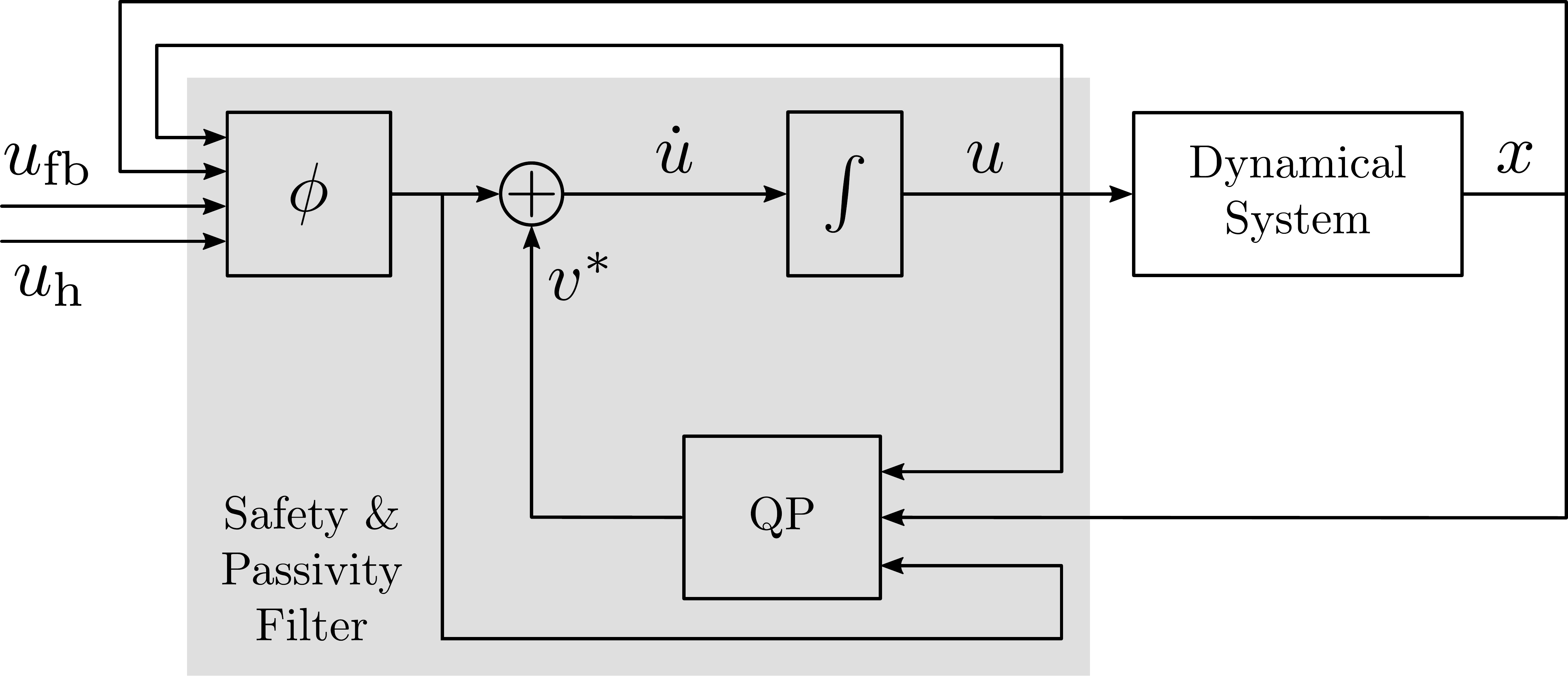}
\caption{Passivity and safety filter: the structure of the filter depicted in Fig.~\ref{fig:passafetyfiltermin} is specified using the results of the paper. The filter includes the computation of the function $\phi$, whose expression is given in \eqref{eq:utrack}, the solution of the convex quadratic program (QP) \eqref{eq:passafeinputQP}, and an integration step, in order to compute the control input $u$ supplied to the system.}
\label{fig:passafetyfilter}
\end{figure}

\begin{remark}[Safety and passivity filter]
Solving the QP \eqref{eq:passafeinputQP} can be interpreted as \textit{filtering} the desired control input given in \eqref{eq:input}---comprised of a state feedback component, $u\fb$, and a human input, $u\h$---to obtain $v^*$. The filtered controller is then integrated in software to obtain the actual control input $u$ supplied to the system \eqref{eq:sysdyn} to ensure its safety and passivity. See Fig.~\ref{fig:passafetyfilter}.

The filtering, i.e. the synthesis of the safety- and passivity-preserving controller, is implemented as an optimization-based controller solution of a convex quadratic program. As such, it can be efficiently solved in online settings, even under real-time constraints, in many robotic applications.
\end{remark}

The following section shows the benefits of the safety and passivity filter developed in this paper applied to the human teleoperation of a second-order dynamical system, modeling a mechanical robotic platform.

\section{Simulation Results}

In this section, we present the results of the application of the safety and passivity filter developed in the previous section to the case of a second-order dynamical system controlled both by a feedback controller and by an external control input of an operator.

The model of the system is the following:
\begin{equation}
\label{eq:sim_sys}
\begin{cases}
\dot x_1 = x_2\\
\dot x_2 = -\sigma x_2 + u\\
y = x_2,
\end{cases}
\end{equation}
where $x_1,x_2,u,y\in\R^2$, $\sigma>0$. Its dynamic extension \eqref{eq:sysdynext} is:
\begin{equation}
\label{eq:sim_sysdynext}
\begin{cases}
\dot x_1 = x_2\\
\dot x_2 = -\sigma x_2 + u\\
\dot u = \phi(x,u,t) + v\\
y = x_2,
\end{cases}
\end{equation}
where $v\in\R^2$. The desired input $\hat u$ is a PD controller aimed at driving the state of the system to the origin:
\begin{equation}
\label{eq:sim_u}
\hat u = u\fb(x) + u\h = -k_P x_1 - k_D x_2 + u\h,
\end{equation}
where $k_P,k_D>0$ are the proportional and derivative control gains, and the human input has been set to $u\h=[-0.3,0]\tr$. From the desired \eqref{eq:sim_u} the expression of $\phi(x,u,t)$ can be obtained using \eqref{eq:utrack}:
\begin{equation}
\phi(x,u,t)  = -k_P x_2 - k_D (-\sigma x_2 + u) + k_I (\hat u - u) + \dot u\h,
\end{equation}
where $k_I>0$ plays the role of $\alpha$ in \eqref{eq:utrack}, i.e. an integral gain, as noticed in Remark~\ref{rmk:integral}.

To ensure the passivity of the system, the following storage function has been employed:
\begin{equation}
V\colon\R^n\to\R\colon x \mapsto \|x\|^2.
\end{equation}
and the I-CBF $h_u$ \eqref{eq:passivityicbf} has been employed. For the system \eqref{eq:sim_sys}, the passivity condition \eqref{eq:passivitycond} becomes:
\begin{equation}
A_u v \le b_u,
\end{equation}
where
\begin{align}
A_u(x) = &x_2\tr\\
b_u(x,u,t) = &-(1+3\sigma^2) \|x_2\|^2 + 2\sigma x_1\tr x_2 - \left(2x_1\tr - 3\sigma x_2\tr\right)u \\
&-x_2\tr \phi(x,u,t)+\gamma_u\left(2\sigma\|x_2\|^2-2x_1\tr x_2-x_2\tr u\right).
\end{align}

Safety has been defined as the condition that $x_1$ never enters the unit disk centered at the origin. To this end, the following CBF has been defined:
\begin{equation}
h_x(x) = \|x_1\|^2 - d^2,
\end{equation}
where $d=1$. As $h_x$ has relative degree 2 with respect to $u$, a recursive approach has been employed, as discussed in Remark~\ref{rmk:reldegree}. Thus, the following two auxiliary CBFs arise:
\begin{align}
h_x^\prime &= 2x_1\tr x_2 + \|x_1\|^2 - d^2\\
h_x^{\prime\prime} &= \|x_1\|^2 + 2\|x_2\|^2 +2(2-\sigma)x_1\tr x_2 + 2x_1\tr u - d^2,
\end{align}
and the safety condition \eqref{eq:cbfdefinition} becomes:
\begin{equation}
A_x v \le b_x,
\end{equation}
where 
\begin{align}
A_x(x) = &-x_1\tr\\
b_x(x,u,t)= & (2x_1\tr+2(2-\sigma)x_2\tr+2u\tr) x_2 + (4x_2\tr+2(2-\sigma)x_1\tr)(-\sigma x_2 + u) \\
&+ 2x_1\tr\phi(x,u,t) + \gamma_x(h_x^{\prime\prime}).
\end{align}

Passivity and safety conditions are then combined in the following single QP equivalent to \eqref{eq:passafeinputQP}:
\begin{align}
\label{eq:sim_passafeinputQP}
v^*(x,u,t)  =  \argmin_{v \in \R^2} ~ &\| v \|^2\\
\subjto & A_x(x) v \le b_x(x,u,t) ~~~\text{(Safety constraint)}\\
& A_u(x) v \le b_u(x,u,t) ~~~\text{(Passivity constraint)}.
\end{align}
The result of the implementation of the solution of \eqref{eq:sim_passafeinputQP} to control the system \eqref{eq:sim_sysdynext} are reported in the following.

\begin{figure}
\centering
\subfloat[][]{\label{subfig:sim_nothing:a}\includegraphics[width=0.35\textwidth]{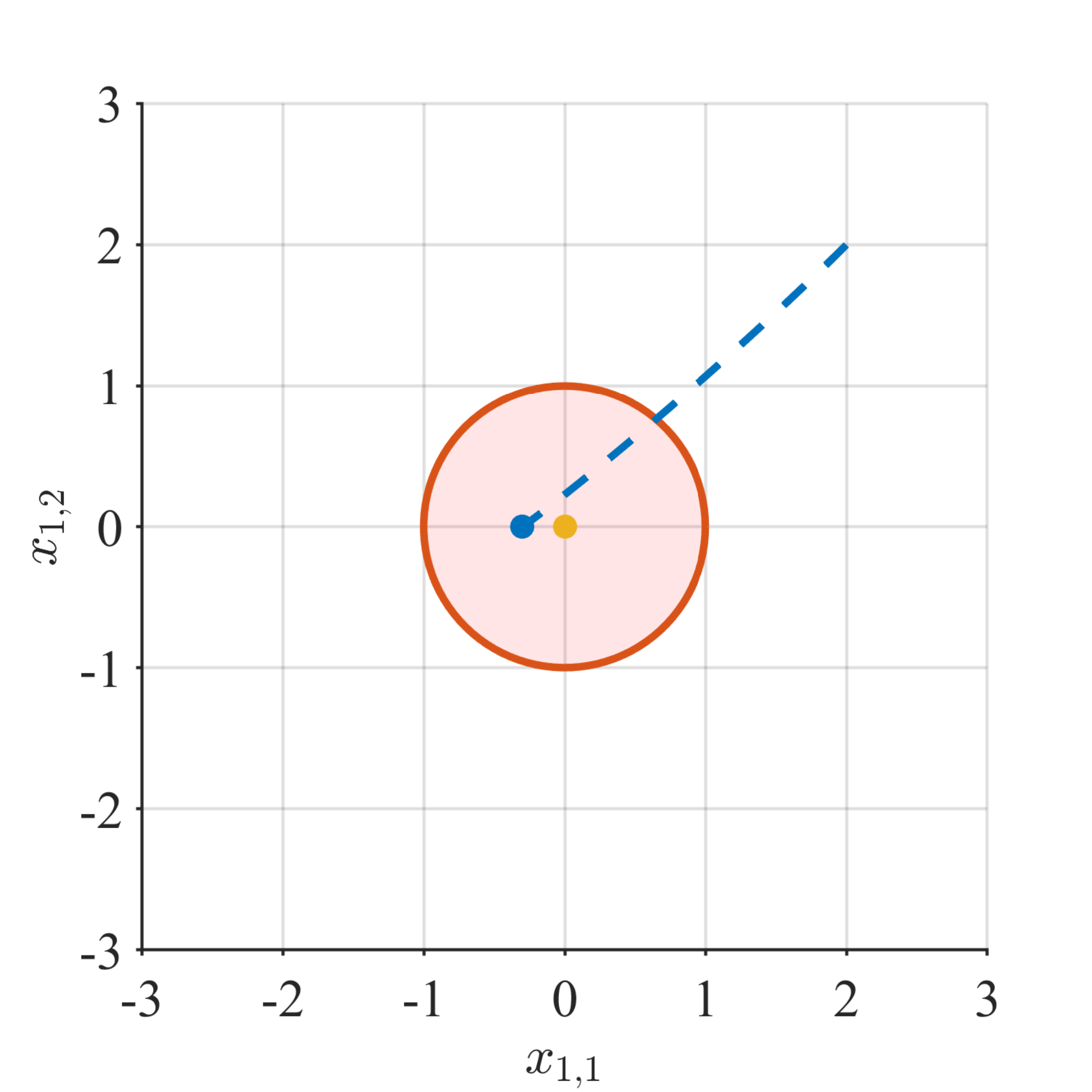}}\hfill
\subfloat[][]{\label{subfig:sim_nothing:b}\includegraphics[width=0.6\textwidth]{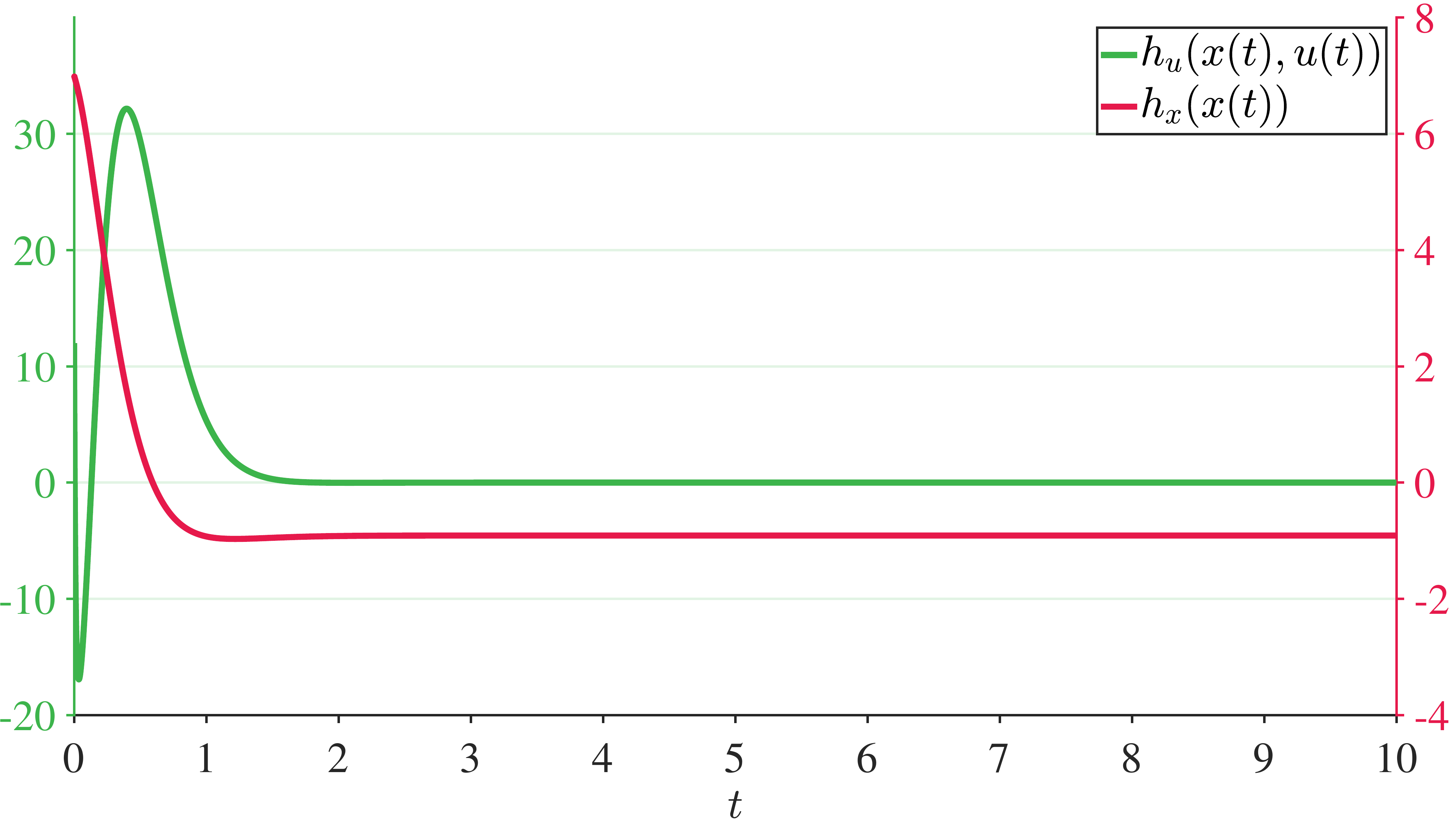}}
\caption{Trajectory (Fig.~\protect\ref{subfig:sim_nothing:a}) and passivity I-CBF (Fig.~\protect\ref{subfig:sim_nothing:b}) for the system \eqref{eq:sim_sysdynext} controlled using $v(t)=0$ for all $t$. The state $x_1$ converges to $[x_{1,1},x_{1,2}]=[-0.3,0]$ as expected, however both the safety and the passivity conditions are violated. In fact, the blue point enters the unsafe region shaded in red in Fig.~\protect\ref{subfig:sim_nothing:a}, and both $h_u$ and $h_x$ take negative values in Fig.~\protect\ref{subfig:sim_nothing:b}.}
\label{fig:sim_nothing}
\end{figure}

Figure~\ref{fig:sim_nothing} shows the trajectory of the state and of the I-CBF $h_u$ for the system \eqref{eq:sim_sysdynext} controlled using $v(t)=0$ for all $t$. As no safety constraint is enforced, the system trajectory enter the red disk (unsafe region). Moreover, as no passivity constraint is enforced, the value of $h_u$ becomes negative. From \eqref{eq:passivityicbf}, this implies that $y\tr u \not\ge \dot V$, i.e. energy is generated and the system is not passive.

\begin{figure}
\centering
\subfloat[][]{\label{subfig:sim_passive:a}\includegraphics[width=0.35\textwidth]{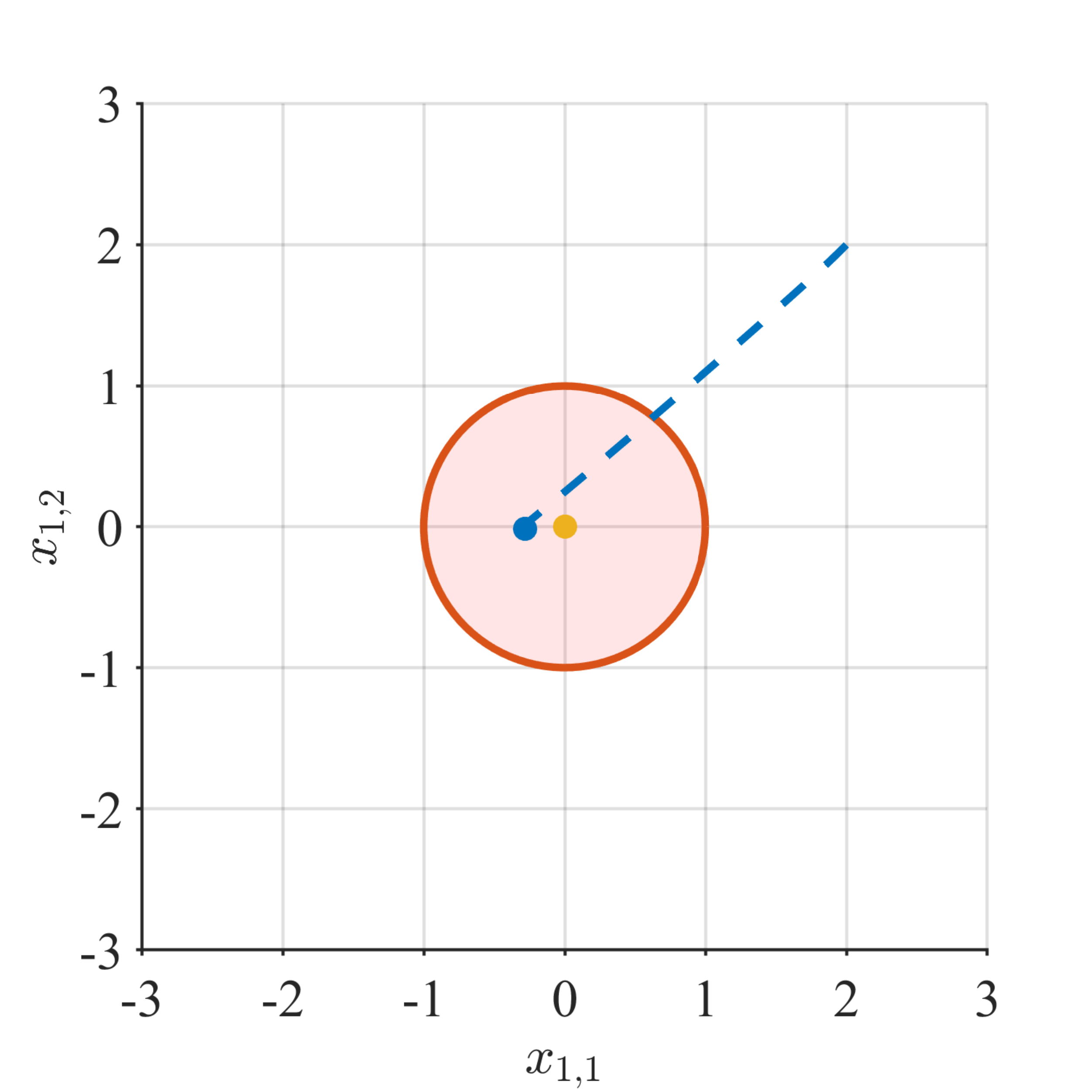}}\hfill
\subfloat[][]{\label{subfig:sim_passive:b}\includegraphics[width=0.55\textwidth]{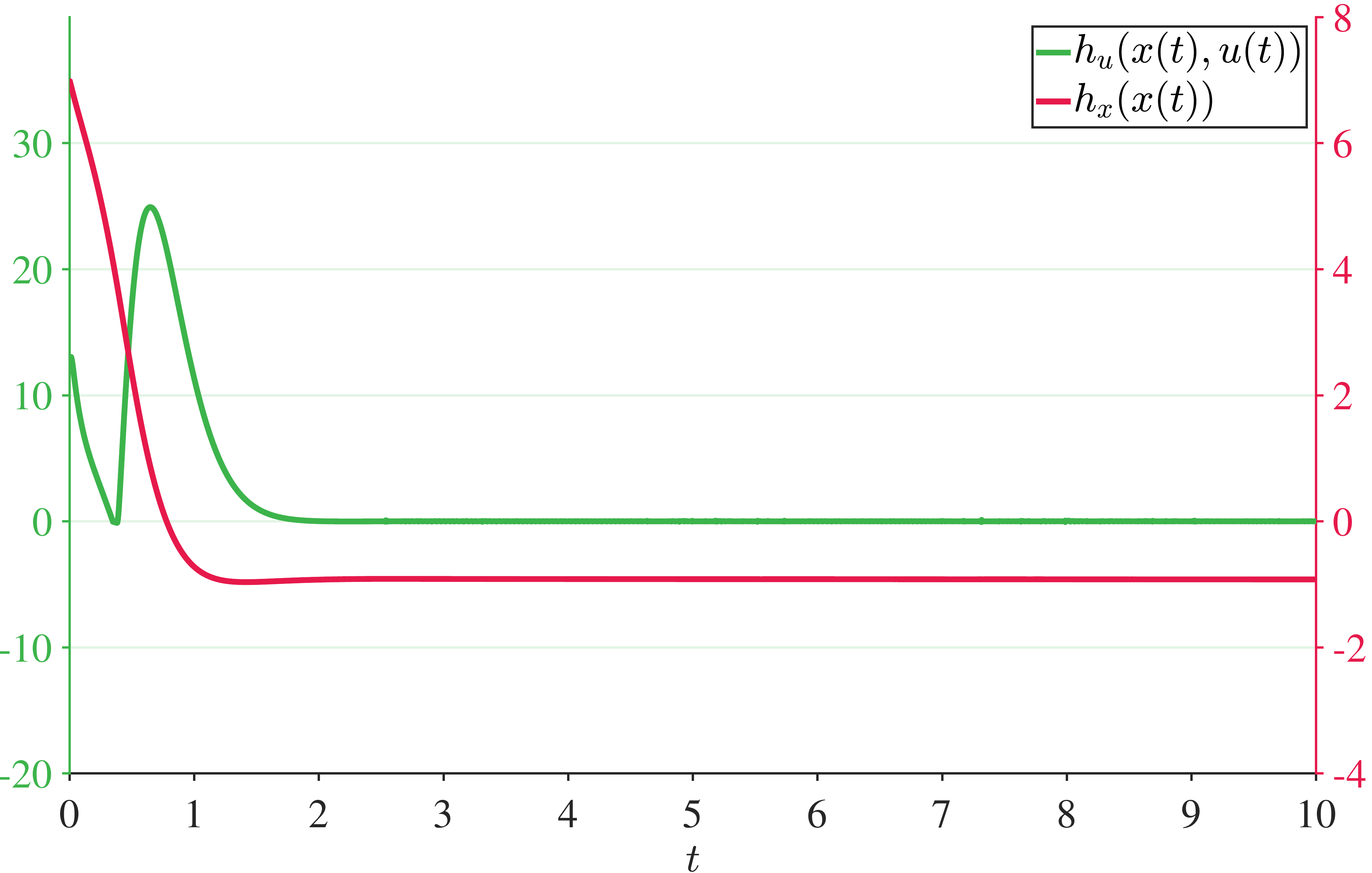}}
\caption{Trajectory (Fig.~\protect\ref{subfig:sim_passive:a}) and passivity I-CBF (Fig.~\protect\ref{subfig:sim_passive:b}) for the system \eqref{eq:sim_sysdynext} controlled using $v(t)=v^*$ solution of the QP \eqref{eq:sim_passafeinputQP} without the safety constraint $A_x(x) v \le b_x(x,u,t)$. The state $x_1$ converges to $[x_{1,1},x_{1,2}]=[-0.3,0]$ as expected and, in addition to the simulation in Fig.~\ref{fig:sim_nothing}, it does so while preserving passivity for all times. In fact, in Fig.~\protect\ref{subfig:sim_nothing:b}, it can be seen how $h_u$ is always kept positive. Values of $h_x$, on the other hand, become negative when the blue dot in Fig.~\protect\ref{subfig:sim_passive:a} is inside the red-shaded disk.}
\label{fig:sim_passive}
\end{figure}

To mitigate this issue, we introduce the passivity I-CBF constraint $A_u(x) v \le b_u(x,u,t)$. Figure~\ref{fig:sim_passive} shows the results of the implementation of the controller $v^*$ solution of the QP \eqref{eq:sim_passafeinputQP} without the safety constraint $A_x(x) v \le b_x(x,u,t)$. The trajectory of the system still enters the unsafe red-shaded region, however the value of $h_u$ is always positive. For the same rationale discussed above, in this case energy is not generated and $y\tr u \ge \dot V$ as desired, i.e. the system is passive.

\begin{figure}
\centering
\subfloat[][]{\label{subfig:sim_passafeinputQP:a}\includegraphics[width=0.35\textwidth]{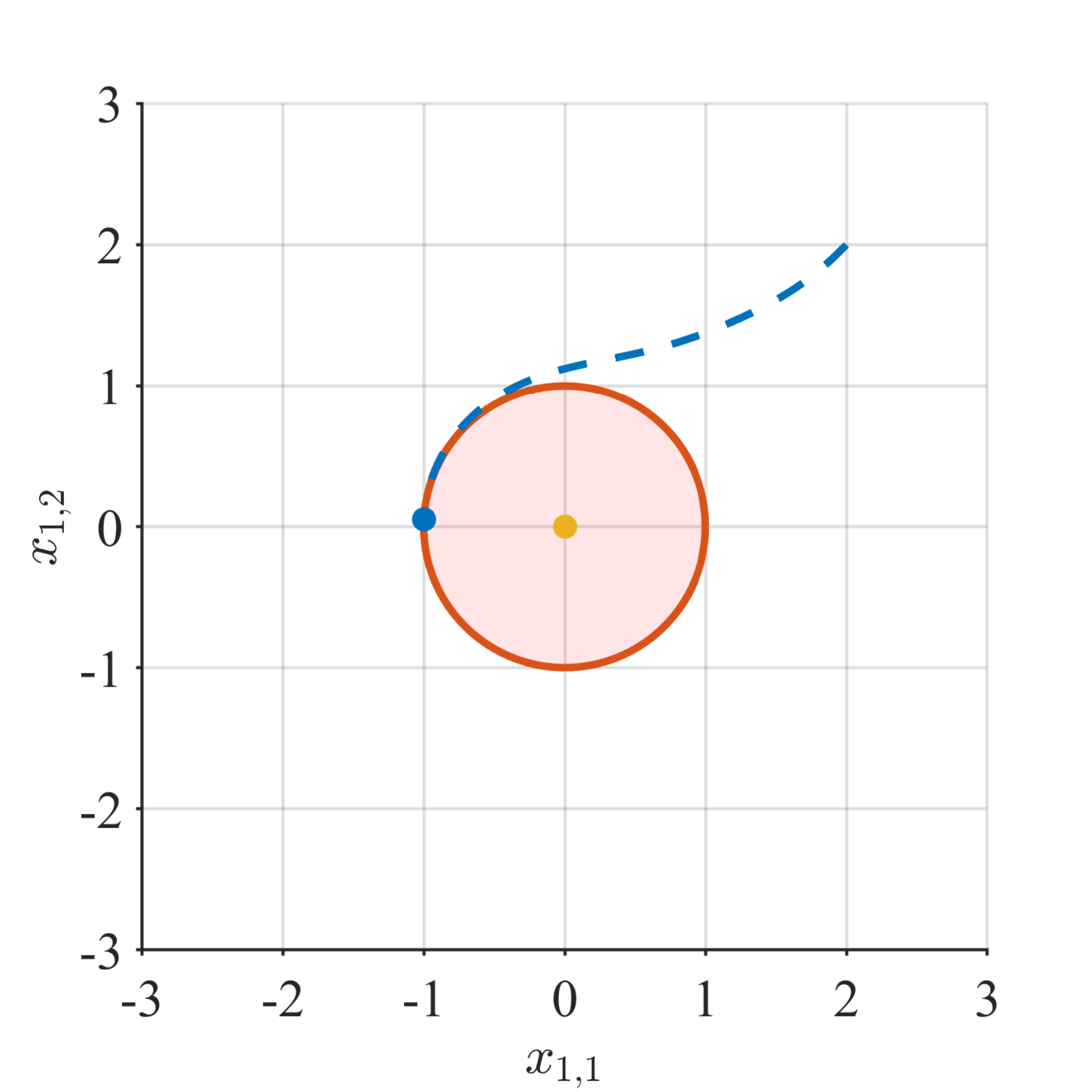}}\hfill
\subfloat[][]{\label{subfig:sim_passafeinputQP:b}\includegraphics[width=0.55\textwidth]{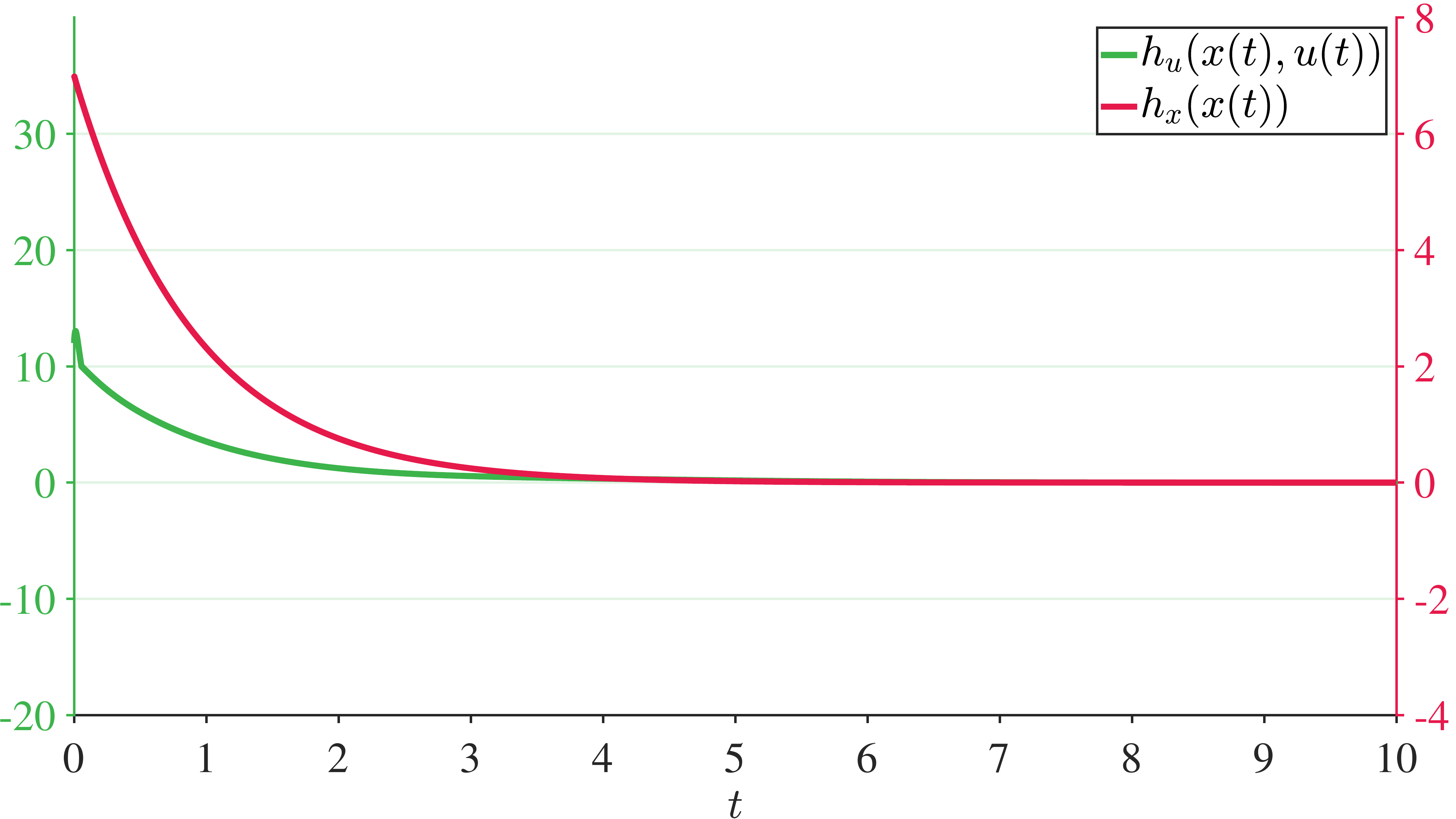}}
\caption{Trajectory (Fig.~\protect\ref{subfig:sim_passafeinputQP:a}) and passivity I-CBF (Fig.~\protect\ref{subfig:sim_passafeinputQP:b}) for the system \eqref{eq:sim_sysdynext} controlled using $v(t)=v^*$ solution of the QP \eqref{eq:sim_passafeinputQP} with safety and passivity constraints. By enforcing the safety constraints, the blue dot in Fig.~\protect\ref{subfig:sim_passafeinputQP:a} is not allowed to enter the unsafe region (red disk). As a result, $x_1$ does not converge to $[x_{1,1},x_{1,2}]=[-0.3,0]$ as desired by the human input, but in this case it reaches the value of $x_1$ in the safe region closest to $[-0.3,0]$, i.e. $[-1,0]$. Moreover, this resulting safe trajectory is obtained by executing only safe actions: Fig.~\protect\ref{subfig:sim_nothing:b} shows how safety and passivity are preserved in terms of the values of $h_x$ and $h_u$ being both kept positive.}
\label{fig:sim_passafe}
\end{figure}

Finally, to show how safety and passivity constraints can be enforced in a holistic fashion, Fig.~\ref{fig:sim_passafe} shows the behavior of the system controlled by the solution of the complete QP \eqref{eq:sim_passafeinputQP}. The trajectory of the system is kept away from the unsafe region by the effect of the safety constraints and, at the same time, the value of $h_u$ remains positive for all times, i.e. the system is safe.

\section{Conclusions and Future Work}

In this paper, we introduced a safety and passivity filter which is able to guarantee that a dynamical system remains passive and a subset of its state space remains forward invariant. This technique is particularly suitable in robot teleoperation scenarios where a human is interconnected---by an input-output relation---to a robotic system and exchange energy with the system through the supplied control inputs. The passivity of the interconnected system guarantees that no energy is generated by the interconnection, while the forward invariance property ensures the safety of the interaction between the human operator and the robotic system. 

Future work will be devoted to the feasibility analysis of the optimization problem which defines the safety and passivity filter, as well as to the introduction of estimation algorithms required to evaluate the input supplied by the human interacting with the robotic system. Moreover, in this paper, we shown the approach applied to a simulated linear second-order system, representing a simple robotic system, controlled by a feedback controller as well as a human input. Future work will focus on applying this method to real manipulator robots and multi-robot systems, which are commonly employed in robot teleoperation applications.

\bibliographystyle{bib/spmpsci}
\bibliography{bib/hfr2020ref}

\end{document}